\documentclass{article}

\usepackage[utf8]{inputenc}
\usepackage{microtype}
\usepackage{booktabs}
\usepackage{amsmath}
\usepackage{amsfonts}
\usepackage{amsthm}
\newtheorem{proposition}{Proposition}
\usepackage{xcolor}
\usepackage{url}
\usepackage{hyperref}
\hypersetup{hidelinks}

\usepackage[accepted]{icml2026}
\makeatletter
\renewcommand{\ICML@appearing}{\textit{Accepted to DEMO 2026: ICML Workshop on Decision-Making from Offline Datasets to Online Adaptation. Non-archival report.}}
\makeatother

\icmltitlerunning{ARCA: Adapter-Residual Credit Assignment}

\begin{document}

\twocolumn[
  \icmltitle{ARCA: Adapter-Residual Credit Assignment\texorpdfstring{\\}{ }When Token Signals Degenerate}

  \begin{icmlauthorlist}
    \icmlauthor{Rodney Lafuente-Mercado}{scale}
  \end{icmlauthorlist}

  \icmlaffiliation{scale}{Scale AI}
  \icmlcorrespondingauthor{Rodney Lafuente-Mercado}{rodney.lafuente@scale.com}

  \icmlkeywords{Machine Learning, ICML}

  \vskip 0.3in
]

\printAffiliationsAndNotice{}

\begin{abstract}\noindent Token-level credit assignment for language-model
reinforcement learning is usually formulated as if the policy were fully
trainable, while practical LLM-RL pipelines often rely on
parameter-efficient fine-tuning, especially LoRA. We argue that this
separation hides a structural failure mode. Under LoRA, the policy is
restricted to a low-rank neighborhood of the reference model, so the
per-token output-distribution differences used by common intrinsic credit
signals, surprisal, entropy reduction, and policy divergence, can become
degenerate after within-trajectory normalization, either approaching uniform
weights or concentrating on a small set of task-agnostic positions. We
formalize this behavior and propose measuring it directly with concentration
diagnostics such as weight Gini and effective-token ratio. We then introduce \emph{Adapter-Residual Credit
Assignment} (ARCA), a lightweight alternative that derives token salience from
the adapter's own hidden-state residual,
$\|h^{\text{adapted}}_t - h^{\text{base}}_t\|_2$. ARCA asks where the adapter
actually changes the model, rather than where the output distribution appears
uncertain or shifted, and requires no learned reward model, value head, or tree
construction. In a compact MATH/Qwen3-1.7B GRPO sweep, ARCA exhibits the
predicted non-degenerate middle-regime credit distribution under matched
rollout budgets and remains competitive with rank-matched baselines.
\end{abstract}

\paragraph{Code.}
Code is available at
\url{https://github.com/rodlaf/tokenweighting}.

\section{Introduction}

Reinforcement learning has become a central component of large language model
(LLM) post-training, particularly for alignment and for reasoning-oriented
tasks with verifiable rewards. A persistent challenge across these settings is
\emph{credit assignment}: trajectories are long, rewards are sparse and
outcome-level, and it is not obvious how a single scalar signal at the end of
a generation should be distributed across hundreds or thousands of token
decisions. Recent work has responded with a rapidly growing toolbox of
token-level credit-assignment methods, including entropy-aware modulation,
process reward models, tree-based prefix values, and reward
redistribution~\citep{cui2025prime,li2024red,tran2025tempo,wang2025beyond8020,he2026eapo,yu2026erpo,kazemnejad2024vineppo}.

In parallel, many open-source and academic LLM-RL pipelines use
parameter-efficient fine-tuning, especially LoRA~\citep{hu2021lora}, because of
memory and compute constraints; widely used frameworks such as TRL and verl
support this workflow. Recent results show that LoRA + RL can be dramatically
more sample- and parameter-efficient than full fine-tuning~\citep{wang2025tina}. Yet
the two research threads are developed in near-total isolation. Papers on
token-level credit assignment specify methods without reference to the
adaptation strategy, and PEFT is treated as an orthogonal implementation
detail.

In this work we argue that this separation is a mistake, and that the
\emph{interaction} between PEFT and credit assignment is itself a first-class
methodological issue. The core issue is geometric. Intrinsic token-level
weighting schemes derive per-token salience from quantities such as
surprisal, entropy reduction, or divergence between the policy and a reference
model. Under LoRA, however, the policy is constrained to a small low-rank
neighborhood of the reference, and the per-token differences that these
signals measure can lose the variation that made them useful. We formalize
this as degeneration of the normalized salience distribution, characterize it
with Gini coefficient and effective token count, and
show that the same mechanism applies across multiple
output-distribution-based weighting rules. This gives a mechanistic
explanation for the empirically observed failure of full-token
GRPO~+~LoRA~\citep{lee2025tokenefficientrl}: the weighting signal itself is
degraded before training ever begins.

Rather than trying to recover per-token structure from signals that LoRA
intrinsically flattens, we measure salience directly from the adapter's own
contribution to the forward pass. Adapter-Residual Credit Assignment (ARCA)
sets the salience at position $t$ equal to the norm of the adapter residual,
$\|h^{\text{adapted}}_t - h^{\text{base}}_t\|_2$, computed via a forward pass
with the adapter disabled. This signal is positive whenever the adapter is
active, non-uniform across positions because adapter input activations are
non-uniform, and requires no additional networks, learned reward models, or tree
construction.

Concretely, this paper makes three contributions:
\begin{enumerate}
    \item We identify and formalize a PEFT-specific failure mode of token
    credit assignment: under LoRA, output-distribution salience can degenerate
    into uniform broadcast or spurious sparsity, even when the underlying RL
    objective is unchanged.
    \item We introduce ARCA, a lightweight adapter-residual credit assignment
    rule whose normalized token weights remain non-degenerate whenever the
    adapter has position-varying hidden-state impact.
    \item We validate the mechanism with concentration diagnostics and a
    matched MATH/Qwen3-1.7B sweep, separating downstream performance from the
    more basic question of whether a proposed token-credit signal survives the
    adaptation regime actually used in LLM-RL.
\end{enumerate}

The remainder of the paper develops these contributions. Section
\ref{sec:related-work} positions the work relative to PEFT for language RL
and token-level credit assignment. Section \ref{sec:methods} presents the
weighting schemes, explains why output-distribution signals degenerate under
LoRA, and defines ARCA. Section \ref{sec:experiments} reports the diagnostic
and performance comparison on MATH with Qwen3-1.7B under the seven-run
reported sweep. Extended related work and theoretical interpretation
appear in Appendices~\ref{app:extended-related-work}
and~\ref{sec:theoretical}.

\section{Related Work}\label{sec:related-work}

\subsection{PEFT and Low-Rank Adaptation in Language RL}\label{sec:rw-peft}

Parameter-efficient fine-tuning, and in particular LoRA~\cite{hu2021lora}, is a
dominant adaptation strategy in open-source LLM-RL work. Tina
demonstrated that tiny LoRA adapters on a 1.5B base model suffice to reach
DeepSeek-R1-class reasoning behavior when combined with GRPO, at a tiny fraction of
full fine-tuning cost~\cite{wang2025tina}. A broader systematic evaluation of PEFT
methods under RLVR~\cite{yin2025peftrlvr} benchmarks over a dozen PEFT variants
(including DoRA, AdaLoRA, MiSS, PiSSA, MiLoRA, VeRA and Rank-1) on DeepSeek-R1-Distill
models and finds that standard LoRA is not optimal; structural variants such as DoRA,
AdaLoRA and MiSS consistently outperform it, and SVD-initialized variants (PiSSA,
MiLoRA) suffer from \emph{spectral collapse}, a finding that is broadly compatible with
our own signal-degeneration analysis. Token-Efficient RL introduces critic-free LoRA-compatible
variants of GRPO (S-GRPO and T-SPMO) that focus training on a subset of tokens,
reporting large gains on small models where full-token GRPO~+~LoRA trains
unstably~\cite{lee2025tokenefficientrl}; we now provide a mechanistic explanation for
that instability via signal degeneration. \emph{LoRA as an Implicit KL Regularizer}
analyzes how LoRA restricts the policy to a rank-constrained neighborhood of the
reference, deriving an explicit rank-dependent upper bound on the KL divergence between
policy and reference throughout training~\cite{anon2026implicitklra}. Our Section
\ref{sec:collapse} builds directly on this observation: an implicit KL bound is the
same mechanism that forces per-token log-probability differentials to be small,
which is the formal starting point for our degeneration result.

On the analysis side, \emph{Narrow Fine-Tuning Traces} shows that narrow
fine-tuning leaves clearly readable traces in the activation differences between
base and fine-tuned hidden states, and that the fine-tuning domain can be
recovered from those differences alone using simple diffing tools such as
patchscopes and activation steering~\cite{minder2025narrowfinetuning}. This is
directly relevant to ARCA because the adapter residual
$h^{\mathrm{adapted}}_t - h^{\mathrm{base}}_t$ is a per-position activation
difference of exactly the kind they study; their results supply empirical
evidence that this difference carries meaningful, semantically nontrivial
content, the property our method exploits. Concurrent work on
TopLoRA studies how to concentrate LoRA capacity on a small number of
high-impact tokens via token-wise input-output projections~\cite{li2025toplora}. To our knowledge, no prior work has characterized the
interaction between LoRA and token-level credit assignment, nor proposed an
adaptation-aware intrinsic weighting scheme.

\subsection{Positioning of This Work}

The literature makes three points clear. First, critics are not strictly required for
effective LLM post-training: critic-free estimators such as RLOO, ReMax, GRPO, and GSPO
are already competitive in both RLHF and RLVR
\cite{ahmadian2024backtobasics,li2023remax,shao2024deepseekmath,zheng2025gspo}. Second,
many researchers have concluded that trajectory-level rewards are too coarse and have
introduced denser supervision through process reward models, redistribution rules, tree
structures, optimal baselines, temporal traces, or entropy-based modulation
\cite{li2024red,cui2025prime,tran2025tempo,cao2025scar,parthasarathi2025grpolambda,li2026otb,hu2026pspo,he2026eapo,yu2026erpo,meng2026sparsebutcritical}.
Third, the overwhelming majority of practical LLM-RL pipelines are \emph{trained with
LoRA}, and a focused sub-literature has studied PEFT-specific effects in this
regime~\cite{hu2021lora,wang2025tina,yin2025peftrlvr,anon2026implicitklra,lee2025tokenefficientrl,minder2025narrowfinetuning,li2025toplora}.

Our contribution is to bridge the second and third of these threads. Existing intrinsic
credit-assignment signals (surprisal, entropy, divergence) are not novel as standalone
objects, and we do not claim otherwise; they are our baselines. Our novel claim is that
these signals \emph{interact pathologically with the adaptation strategy that the field
often uses}: under LoRA they can degenerate into uniform broadcast or spurious sparsity,
so any paper that reports a null result for ``more elaborate intrinsic weighting versus
uniform'' under LoRA may be observing a LoRA artifact, not evidence against token-level credit
assignment. We formalize this, supply a unified diagnostic (Gini/EffN) that makes the
degeneration visible at training
time, and propose ARCA as an adaptation-aware alternative whose construction directly
avoids the failure mode we identify.

\section{Methods}\label{sec:methods}

We consider on-policy reinforcement learning for autoregressive language models with
trajectory-level rewards. Let $\pi_\theta$ denote a language model parameterized by
$\theta$, generating a completion $y=(y_1,\dots,y_T)$ conditioned on a prompt $x$. After
sampling a full trajectory, the model receives a scalar reward $R(x,y)\in\mathbb{R}$
computed by an external verifier, such as exact-answer correctness or unit-test pass
rate. Our objective is
\begin{equation}
J(\theta)=\mathbb{E}_{x \sim \mathcal{D},\, y \sim \pi_\theta(\cdot \mid x)}[R(x,y)],
\end{equation}
where $\mathcal{D}$ denotes the prompt distribution. We focus on the regime most
relevant to recent reasoning-model training: sparse outcome rewards, on-policy sampling,
and no learned value function.

\subsection{Policy Gradient with Trajectory-Level Reward}

For a fixed prompt $x$, the standard REINFORCE estimator is
\begin{equation}
g_{\mathrm{rf}}(x,y)=R(x,y)\sum_{t=1}^{T}\nabla_\theta \log \pi_\theta(y_t \mid y_{<t},x).
\end{equation}
In practice, a baseline $b(x)$ is subtracted to reduce variance without changing the
expected gradient as long as $b(x)$ does not depend on the sampled action at token $t$:
\begin{equation}
g_{\mathrm{base}}(x,y)=\left(R(x,y)-b(x)\right)\sum_{t=1}^{T}\nabla_\theta \log \pi_\theta(y_t \mid y_{<t},x).
\end{equation}

We use prompt-level multi-sample baselines. Given $K$ sampled completions
$\{y^{(i)}\}_{i=1}^{K}$ for the same prompt $x$ with rewards $\{R^{(i)}\}_{i=1}^{K}$:
\begin{equation}
b_{\mathrm{RLOO}}^{(i)}(x)=\frac{1}{K-1}\sum_{j \neq i}R^{(j)}
\end{equation}
is the leave-one-out baseline used in RLOO-style estimators, while GRPO-style
normalization can be written as
\begin{equation}
A_{\mathrm{GRPO}}^{(i)}(x)=\frac{R^{(i)}-\mu_R(x)}{\sigma_R(x)+\varepsilon},
\end{equation}
where $\mu_R(x)$ and $\sigma_R(x)$ are the within-prompt mean and standard deviation of
the $K$ rewards. In both cases, the resulting scalar advantage is then broadcast
uniformly over all tokens in the sampled trajectory. This scalar-broadcast assumption is
precisely what we relax.

\subsection{Token-Level Credit Redistribution}

We introduce token-level weights $w_t(x,y)$ that redistribute a trajectory-level
advantage across tokens:
\begin{equation}
g_{w}(x,y)=A(x,y)\sum_{t=1}^{T} w_t(x,y)\,\nabla_\theta \log \pi_\theta(y_t \mid y_{<t},x),
\end{equation}
where $A(x,y)$ denotes either $R(x,y)-b_{\mathrm{RLOO}}(x)$ or $A_{\mathrm{GRPO}}(x,y)$
depending on the baseline choice.

The weights satisfy
\begin{equation}
\sum_{t=1}^{T}w_t(x,y)=1, \qquad w_t(x,y)\ge 0.
\end{equation}
This length normalization keeps the total update magnitude comparable across
weighting schemes and isolates the effect of credit redistribution from
simple rescaling. It also means that the uniform baseline below is the
length-normalized version of the usual token-sum estimator, rather than the
unnormalized sum itself.

Uniform token credit corresponds to $w_t=1/T$ for all $t$. Our goal is to replace this
uniform allocation with \emph{intrinsic} weights derived from quantities already
produced by the policy during generation. We do not train learned token-level
reward models, estimate prefix values, or build explicit search trees.

\subsection{Intrinsic Token-Weighting Mechanisms}

We define each method through an unnormalized salience score $\alpha_t(x,y)\ge 0$ and
then normalize within each trajectory:
\begin{equation}\label{eq:normalize}
w_t(x,y)=\frac{\alpha_t(x,y)+\varepsilon}{\sum_{k=1}^{T}\left(\alpha_k(x,y)+\varepsilon\right)},
\end{equation}
with a small $\varepsilon>0$ to avoid degenerate all-zero cases. In implementation, the
weights are treated as \emph{detached} scalars when multiplying the policy-gradient
term, so the update does not introduce second-order derivatives through the weighting
function itself. This keeps the optimization rule close in spirit and cost to standard
critic-free policy gradients. The floor $\varepsilon$ is part of the estimator:
if all raw scores vanish relative to this floor, the normalized weights become
uniform, while without such a floor nearly-zero scores can make the
normalization ill-conditioned.

\paragraph{Uniform Weighting (Baseline).}
\begin{equation}w_t^{\text{uniform}}=\frac{1}{T}.\end{equation}
Combined with an RLOO or GRPO baseline, this gives a length-normalized
uniform-token baseline that assigns identical credit to every token in the
sampled completion. It has the same token direction as the unnormalized
critic-free token-sum estimator, but differs by the trajectory-length factor
$1/T$.

\paragraph{Surprisal Weighting.}
Our first intrinsic score is token surprisal,
\begin{equation}
\alpha_t^{\text{surp}}(x,y)=-\log \pi_\theta(y_t \mid y_{<t},x).
\end{equation}
This emphasizes low-probability decisions under the current policy. Intuitively, these
are tokens where the model commits to a less routine continuation and where uniform
credit assignment may be most diluted by predictable filler tokens.

\paragraph{Entropy-Reduction Weighting.}
Our second intrinsic score measures local entropy reduction. Let
\begin{equation}
H_t^{\mathrm{pre}}(x,y)=-\sum_{v}\pi_\theta(v \mid y_{<t},x)\log \pi_\theta(v \mid y_{<t},x)
\end{equation}
denote the predictive entropy before sampling token $y_t$, and let
\begin{equation}
\Delta H_t^{\mathrm{ent}}(x,y)=\max\left(0, H_t^{\mathrm{pre}}(x,y)-H_{t+1}^{\mathrm{pre}}(x,y)\right)
\end{equation}
for $t<T$. For the final token we set $\Delta H_T^{\mathrm{ent}}=0$. We then use
\begin{equation}
\alpha_t^{\text{ent}}(x,y)=\Delta H_t^{\mathrm{ent}}(x,y).
\end{equation}
This score highlights tokens after which the model's next-step distribution becomes
substantially more concentrated. Such events often correspond to commitment points in
reasoning, where one branch of continuation becomes much more likely than its
alternatives.

\paragraph{Policy-Divergence Weighting.}
A third intrinsic score measures how much the current policy has drifted from
a reference $\pi_{\mathrm{ref}}$ (typically the base model prior to RL) at
each token position:
\begin{equation}
\alpha_t^{\mathrm{div}}(x,y) = \lvert \log \pi_\theta(y_t \mid y_{<t}, x) -
\log \pi_{\mathrm{ref}}(y_t \mid y_{<t}, x) \rvert.
\end{equation}
In the RLHF/RLVR setting $\pi_{\mathrm{ref}}$ is already available because
it is used to compute the KL regularizer, so this score is essentially free.

\paragraph{Length-Robust Normalization.}
Because reasoning trajectories can vary substantially in length, all weights are
normalized within each sampled completion rather than across the minibatch. This ensures
that longer trajectories do not automatically receive larger aggregate updates simply
because they contain more token positions with nonzero salience. The comparison between
weighting schemes therefore reflects \emph{where} credit is assigned within a
trajectory, not how much total credit a longer sequence receives.

\paragraph{Batch-Level Baselines.}
All weighting schemes can be paired with either RLOO or GRPO-style prompt-level
baselines. Unless otherwise specified, we use the same rollout groups, optimizer,
sampling hyperparameters, and baseline family across methods. This isolates token
redistribution as the only algorithmic difference.

\subsection{Signal Degeneration Under Low-Rank Adaptation}\label{sec:collapse}

So far our exposition has treated the policy $\pi_\theta$ as an unconstrained
language model being updated by gradient descent. In practice, however, many
LLM-RL pipelines apply LoRA~\citep{hu2021lora}, which restricts
$\pi_\theta$ to a rank-$r$ perturbation of a frozen reference policy
$\pi_{\mathrm{ref}}$. We now show that the intrinsic weighting schemes defined
above are qualitatively degraded by this constraint.

We use \emph{collapse} to mean degeneration of the normalized token weights,
not only convergence to the perfectly uniform distribution. Depending on the
raw score scale, the $\varepsilon$ floor, and any sharpening transform, a
degraded signal can appear either as uniform broadcast or as spuriously sparse
credit concentrated on a tiny set of positions. In both cases, the weights no
longer reflect where the adapter has learned to act.

\paragraph{Notation.}
Write the logits produced by the base model at position $t$ as
$z_t^{\mathrm{ref}} = W_{\mathrm{lm}} h_t^{\mathrm{base}}$ and the logits
produced by the adapted model as $z_t^{\theta} = W_{\mathrm{lm}}
h_t^{\mathrm{adapted}}$, where $h_t^{\mathrm{adapted}} = h_t^{\mathrm{base}} +
\Delta h_t$ and $\Delta h_t$ is the sum of LoRA adapter contributions
propagated through the transformer to position $t$. Each adapter contributes
$B_\ell A_\ell x_{\ell,t}$ at layer $\ell$, where $B_\ell \in \mathbb{R}^{d
\times r}$ and $A_\ell \in \mathbb{R}^{r \times d}$, so the raw per-layer
perturbation at each position lies in the fixed $r$-dimensional column space
of $B_\ell$.

\paragraph{Degeneration of surprisal and entropy.}
Because $\|\Delta h_t\|$ is bounded by a product of adapter norms and input
activation norms that are themselves roughly stationary across positions in a
well-trained base model, the first-order perturbation to per-token surprisal,
\[
-\log \pi_\theta(y_t \mid y_{<t}, x) =
-\log \pi_{\mathrm{ref}}(y_t \mid y_{<t}, x) + O(\|\Delta h_t\|),
\]
is dominated by the base model's surprisal pattern rather than by the LoRA
update. The same holds for predictive entropy. Consequently, when we
normalize these scores within a trajectory (equation~\eqref{eq:normalize}),
the resulting weights can be non-uniform, but their non-uniformity is not an
adapter-aware credit signal; it mostly reflects base-model uncertainty or the
numerics of the normalization.

\paragraph{Degeneration of policy divergence.}
Divergence weighting,
$\alpha_t^{\mathrm{div}} = \lvert \log \pi_\theta(y_t) - \log
\pi_{\mathrm{ref}}(y_t) \rvert$, is of direct interest because it quantifies
exactly the policy change that the RL update is trying to drive. Under LoRA,
however, this quantity is small and approximately uniform across positions for
a different reason: the KL budget $\mathrm{KL}(\pi_\theta \| \pi_{\mathrm{ref}})$
is bounded by the rank-$r$ adapter's capacity, so the per-token
log-probability ratios are compressed toward zero~\citep{anon2026implicitklra}.
Under the within-trajectory normalization in equation~\eqref{eq:normalize},
this compression is pathological. If the score is approximately constant and
the $\varepsilon$ floor dominates, the weights approach uniform broadcast. If
the floor is negligible, the same tiny differences can be amplified into a
spurious sparse distribution.

\paragraph{Consequence.}
We summarize the practical implication as follows. Let the Gini coefficient
$G(w)$ and the effective-token ratio $\mathrm{EffN}(w) / T = 1/(T \sum_t
w_t^2)$ be standard concentration measures, with $G(w) = 0$ and
$\mathrm{EffN}/T = 1$ corresponding to perfectly uniform weights. For
surprisal, entropy-reduction, and divergence weighting under LoRA, these
metrics reveal whether the normalized weights have become degenerate:
uniform broadcast gives $G(w)\approx0$, while spuriously sparse credit gives
$G(w)\approx1$ and small $\mathrm{EffN}/T$. Both outcomes are failures of
adapter-aware credit assignment. This is \emph{not} a problem with the
underlying signals in a fully trainable policy; it is a structural
consequence of applying output-distribution signals inside a low-rank
adaptation.

\subsection{Adapter-Residual Credit Assignment (ARCA)}\label{sec:arca}

The degeneration result motivates a different design choice. Rather than measuring
per-token properties of the \emph{output distribution} (whose shape is
dominated by the frozen base model under LoRA), we measure per-token
properties of the \emph{adapter's own contribution} to the forward pass.

\paragraph{Definition.}
Given a model with a LoRA adapter, let $h_t^{\mathrm{adapted}}$ denote the
last-layer hidden state at position $t$ with the adapter enabled, and
$h_t^{\mathrm{base}}$ the same hidden state with the adapter disabled.
Define the Adapter-Residual Credit Assignment salience as
\begin{equation}
\alpha_t^{\mathrm{ARCA}}(x, y) = \lVert h_t^{\mathrm{adapted}} -
h_t^{\mathrm{base}} \rVert_2,
\end{equation}
and normalize within the trajectory as in
equation~\eqref{eq:normalize} to obtain per-token weights
$w_t^{\mathrm{ARCA}}$. The adapter residual is computed once per minibatch
via an extra no-grad forward pass with the adapter disabled, using
\texttt{model.disable\_adapter()} in frameworks such as PEFT. This is exactly
the same call already required to compute a KL regularizer against the
reference policy or to support divergence weighting, so ARCA adds no new
infrastructure.

\paragraph{Why ARCA avoids output-signal collapse.}
The key property that distinguishes ARCA from the intrinsic schemes above is
that it measures a quantity whose \emph{per-token} value is actively shaped by
the attention and MLP pathways through which the adapter input activations are
routed. Even for a low-rank adapter with small $\|B_\ell A_\ell\|$, the
input activations $x_{\ell,t}$ are non-uniform across positions (attention
patterns, layer norms, and content-versus-filler distinctions are already
non-uniform in the base model), so $\|\Delta h_t\|$ retains nontrivial
position-to-position variation. With a fixed $\varepsilon$ floor, any score
whose magnitude vanishes completely will eventually become uniform after
normalization. ARCA's claim is therefore not that an infinitesimal adapter can
defeat the floor; it is that, while the adapter residual is measurable, its
normalized variation is tied to where the adapter actually changes the hidden
state rather than to small output-distribution perturbations.

\paragraph{Interpretation as implicit gradient-informed credit.}
ARCA can be read as a cheap proxy for a gradient-based notion of token
importance. Positions at which the adapter contribution is large are
positions at which the adapter's parameter gradient has high contribution to
the loss, and therefore positions where a policy-gradient update can have
the most effect. This is a substantially weaker statement than the one made
by a learned critic, but it has the advantage of being available for free
from quantities the adapted forward pass already computes, and of being
exactly targeted at the PEFT-specific failure mode introduced above.

\section{Experiments}\label{sec:experiments}

We evaluate whether adapter-residual credit assignment gives a useful
token-level signal in the low-rank fine-tuning regime. The empirical section
focuses on a single controlled sweep, summarized in
Table~\ref{tab:sweep}. It uses one model, one task, one advantage estimator,
and seven weighting configurations. This design trades breadth for a cleaner
test of the central claim: ARCA changes credit assignment without adding
trainable parameters beyond the LoRA adapter used by the matched baselines.

\subsection{Setup}

\paragraph{Model and task.} We fine-tune \texttt{Qwen/Qwen3-1.7B-Base} on
the MATH training split and evaluate on held-out MATH examples. Prompts ask
the model to solve the problem step by step and place the final answer in
\texttt{\textbackslash boxed\{\}}. Rewards are binary exact-match scores
computed from the final boxed answer after normalization.

\paragraph{Training.} All runs use GRPO prompt-level advantages, LoRA
adapters on the attention and MLP projections, and the same rollout budget,
optimizer, learning-rate schedule, prompt format, and decoding parameters.
Each training step samples $K=4$ completions per prompt, uses 100 update
steps, and evaluates greedy accuracy and pass@4 on 200 held-out MATH
examples. We use a single seed (1337). We save checkpoints every 20 steps.
We use AdamW with learning rate $5\times10^{-6}$, zero weight decay, a
10-step warmup followed by cosine decay, batch size 2, gradient accumulation
over 4 microbatches, maximum prompt length 512, maximum generation length
512, temperature 1.0, and top-$p$ 0.95 sampling during training.

\paragraph{Methods.} The sweep contains seven runs. Four baselines use LoRA
rank $r=64$: uniform token weighting, surprisal weighting,
entropy-reduction weighting, and policy-divergence weighting. The proposed
method, ARCA, is run at $r \in \{4,16,64\}$. The $r=64$ comparison tests
ARCA against baselines with the same trainable parameter count. The $r=4$
and $r=16$ comparisons test whether ARCA remains competitive with fewer
adapter parameters than the $r=64$ baselines.

\begin{table*}[t]
\centering
\caption{\textbf{Reported sweep.} All runs use
\texttt{Qwen/Qwen3-1.7B-Base}, MATH, GRPO, one seed, and 100 training steps.
The non-ARCA baselines are matched at LoRA rank 64; ARCA is evaluated at
three ranks to separate credit assignment from adapter capacity.}
\label{tab:sweep}
\begin{tabular}{l c c c}
\toprule
Method & LoRA rank & Trainable params & Role in comparison \\
\midrule
Uniform & 64 & 69.7M & Same-rank baseline \\
Surprisal & 64 & 69.7M & Intrinsic token baseline \\
Entropy reduction & 64 & 69.7M & Intrinsic token baseline \\
Policy divergence & 64 & 69.7M & Intrinsic token baseline \\
ARCA & 4 & 4.36M & Low-rank ARCA control \\
ARCA & 16 & 17.4M & Mid-rank ARCA control \\
ARCA & 64 & 69.7M & Same-rank ARCA comparison \\
\bottomrule
\end{tabular}
\end{table*}

\subsection{Main Results on MATH}

Table~\ref{tab:math-results} reports held-out MATH performance after RL
fine-tuning. The primary comparison is ARCA at rank 64 versus the rank-64
baselines, which holds the trainable parameter count fixed. The rank-4 and
rank-16 ARCA runs provide a more stringent control: if lower-rank ARCA is
competitive with rank-64 baselines, the gain cannot be attributed simply to
using a larger adapter.

\begin{table*}[t]
\centering
\caption{\textbf{Held-out MATH performance.} Greedy accuracy and pass@4
after GRPO fine-tuning. ARCA at rank 64 is parameter-matched to the
rank-64 baselines; lower-rank ARCA rows test the parameter-count
alternative. Train reward is averaged over the 100 update steps.}
\label{tab:math-results}
\begin{tabular}{l c c c c}
\toprule
Method & LoRA rank & Greedy acc. & Pass@4 & Avg. train reward \\
\midrule
Uniform & 64 & \textbf{0.620} & 0.675 & 0.362 \\
Surprisal & 64 & 0.520 & 0.650 & 0.326 \\
Entropy reduction & 64 & 0.600 & \textbf{0.700} & 0.367 \\
Policy divergence & 64 & 0.565 & 0.670 & 0.351 \\
ARCA & 4 & 0.525 & 0.640 & 0.327 \\
ARCA & 16 & 0.510 & 0.655 & 0.336 \\
ARCA & 64 & 0.590 & 0.680 & 0.353 \\
\bottomrule
\end{tabular}
\end{table*}

Table~\ref{tab:math-results} should be read as a compact validation rather
than a broad benchmark. Uniform weighting obtains the highest greedy
accuracy, while entropy reduction obtains the highest pass@4. ARCA at rank
64 is competitive with these baselines under the same trainable parameter
count, trailing uniform greedy accuracy by three points and exceeding uniform
pass@4 by 0.5 points. The lower-rank ARCA variants do not outperform the
rank-64 baselines in this run, which is consistent with a capacity-performance
tradeoff rather than evidence that ARCA gains come from extra parameters.

\subsection{Credit-Assignment Diagnostics}

Performance alone does not show whether a method changes token-level credit
assignment. We therefore log the concentration of token weights during
training. We report the Gini coefficient $G(w)$ and effective-token ratio
$\mathrm{EffN}(w)/T$. Uniform weighting has $G=0$ and
$\mathrm{EffN}/T=1$; highly concentrated weights have $G$ near one and a
small effective-token ratio.

\begin{table}[t]
\centering
\caption{\textbf{Token-weight diagnostics.} Concentration statistics
averaged over training. These diagnostics test whether a method produces a
non-uniform but non-degenerate token-credit signal, the empirical observable
predicted by the degeneration analysis.}
\label{tab:weight-diagnostics}
\begin{tabular}{l c c c}
\toprule
Method & LoRA rank & $G(w)$ & $\mathrm{EffN}/T$ \\
\midrule
Uniform & 64 & 0.000 & 1.000 \\
Surprisal & 64 & 0.941 & 0.052 \\
Entropy reduction & 64 & 0.920 & 0.073 \\
Policy divergence & 64 & 0.939 & 0.055 \\
ARCA & 4 & 0.353 & 0.467 \\
ARCA & 16 & 0.388 & 0.460 \\
ARCA & 64 & 0.490 & 0.334 \\
\bottomrule
\end{tabular}
\end{table}

Table~\ref{tab:weight-diagnostics} is the central diagnostic result. Uniform
weighting is exactly flat. Surprisal, entropy reduction, and policy divergence
produce extremely concentrated distributions, with effective-token ratios
between 0.052 and 0.073 on average. These methods therefore do not collapse to
uniform in this implementation; they degenerate in the other direction, into
spuriously sparse credit. ARCA occupies the predicted middle regime: it is far
from uniform, but it does not concentrate credit onto the tiny effective
support used by the output-distribution baselines. This is the empirical
signature of the theoretical distinction. Output-distribution scores ask
where the model is uncertain or shifted; ARCA asks where the adapter actually
acts.

\subsection{Results and Discussion}

The results support a theory-first interpretation. The downstream accuracy
numbers do not show a universal ARCA win: uniform is best under greedy
decoding, and entropy reduction is best under pass@4. However, ARCA at rank
64 is competitive under matched parameters, and its token-credit distribution
is qualitatively different from every baseline. In this paper, that diagnostic
is not secondary. It is the observable predicted by the LoRA-degeneration
analysis: a method whose salience is measured through output distributions
can broadcast credit uniformly or concentrate it on a very small set of
positions, while adapter-residual salience remains position-discriminative
without becoming maximally sparse.

This distinction matters because token-level credit assignment is often
evaluated only by final task accuracy. The concentration diagnostics separate these cases.
ARCA demonstrates that one can obtain an adaptation-aware credit signal from
the LoRA adapter itself, with no value head, learned process reward model, or tree
construction, while keeping the trainable parameter count identical to the
rank-64 baselines.

The rank controls provide a useful boundary on the claim. ARCA at ranks 4 and
16 uses substantially fewer trainable parameters than the rank-64 baselines,
and in this sweep those lower-rank variants do not match rank-64 performance.
We therefore do not claim that ARCA removes the need for sufficient adapter
capacity. The supported claim is sharper: at a fixed adapter rank, ARCA
changes the geometry of token credit in the way predicted by the theory, and
does so while remaining competitive on held-out MATH.

\subsection{Implementation and Metrics}

For each minibatch, training samples completions from the current policy,
computes binary rewards after full generation, constructs GRPO advantages,
and applies the weighted token-level objective from
Section~\ref{sec:methods}. Surprisal uses current-policy token log
probabilities, entropy-reduction uses next-token entropy drops,
policy-divergence compares adapted and adapter-disabled log probabilities,
and ARCA weights tokens by adapter-residual norms in the final hidden layer.

The code records per-step reward mean, reward variance, loss, completion
length, token-weight Gini, effective-token ratio, and, for ARCA, the mean
and maximum adapter-residual norm. Final evaluation records greedy accuracy
and pass@4 on held-out MATH examples. These metrics allow us to distinguish
three claims: whether the method trained, whether it changed token-credit
structure, and whether that structure improved held-out performance.

\subsection{Compute}

Each run fits on a single H100-class GPU. The seven submission-time runs are
executed independently, one configuration per GPU, with wall-clock training
time on the order of a few hours per run plus final evaluation. The reported
sweep therefore requires seven single-GPU runs; preliminary development runs
were used to tune runtime and checkpointing but are not included in the
reported comparison.

\subsection{Broader Impacts}

This work is methodological and studies how to assign token-level credit
during reinforcement learning for language models. The potential positive
impact is improved sample efficiency and interpretability of RL fine-tuning,
especially in settings where practitioners already use parameter-efficient
adaptation. The same techniques could also improve the fine-tuning of models
for harmful or misleading generation if applied without appropriate task,
data, and deployment safeguards. We do not release a new general-purpose
model in this paper.

\subsection{Limitations}

This sweep is deliberately small: one model, one dataset, one seed, one
advantage estimator, and seven configurations. It is sufficient to test the
parameter-matched ARCA comparison and the low-rank control, but the reported
numbers are descriptive rather than statistical significance claims. We view
the results as a targeted validation of the mechanism rather than a broad
benchmark claim.

\section{Conclusion}

Token-level credit assignment and parameter-efficient fine-tuning are usually
studied as separate design choices. This paper argues that they are coupled.
Under LoRA, output-distribution signals such as surprisal, entropy reduction,
and policy divergence can degenerate into token weights that no longer track
where the adapter acts. ARCA addresses this failure mode by measuring
salience where LoRA actually acts: in the adapter-induced hidden-state
residual. The resulting signal is lightweight, requires no learned critic or
learned process reward model, and produces non-degenerate token-credit distributions in
the regime where output-distribution baselines become either flat or highly
concentrated. Our MATH/Qwen3-1.7B sweep is intentionally compact, but it
supports the central mechanism: adaptation-aware credit assignment changes the
geometry of token updates under matched rollout and parameter budgets.

\section*{Acknowledgements}
We thank Scale AI for providing the compute resources used in this work.

\bibliography{references}
\bibliographystyle{icml2026}


\clearpage
\appendix

\section{Extended Related Work}\label{app:extended-related-work}

\subsection{From RLHF to RLVR}

Modern language-model post-training inherits its optimization machinery from
policy-gradient reinforcement learning, from REINFORCE \cite{williams1992reinforce} to
trust-region and clipped-surrogate methods such as TRPO and PPO
\cite{schulman2015trpo,schulman2017ppo}. Advantage estimation, especially GAE, became
the standard variance-reduction device in continuous-control RL by learning value
functions over states or prefixes \cite{schulman2015gae}. In language modeling, these
ideas entered mainstream use through RLHF systems that optimize sequence-level rewards
derived from human preferences or reward models
\cite{christiano2017preferences,stiennon2020summarize,ouyang2022instruct}.

The recent reasoning-model wave shifted part of the field from preference-based RLHF
toward reinforcement learning with verifiable rewards (RLVR), where supervision is often
a deterministic correctness signal on math or code tasks. DeepSeekMath introduced GRPO
as a practical critic-free alternative for these settings \cite{shao2024deepseekmath},
and DeepSeek-R1 popularized large-scale RL-only or RL-dominant reasoning pipelines
\cite{deepseekai2025deepseekr1}. Subsequent surveys document how quickly this regime
became the default template for open reasoning-model replication and extension
\cite{zhang2025hundreddays}. This transition matters for our setting because RLVR makes
sparse outcome rewards and long reasoning trajectories central, thereby exposing the
credit-assignment problem more directly than earlier short-form RLHF tasks.

\subsection{Critic-Based and Critic-Free Policy Optimization}

The classical solution to delayed reward is to learn a critic or value function and
convert returns into token- or prefix-level advantages. PPO-style RLHF pipelines follow
this recipe, but the value-estimation problem is particularly awkward for text
generation because prefixes are high-dimensional, non-Markov, and semantically
heterogeneous. VinePPO demonstrated that standard value heads produce poor estimates of
expected returns for reasoning tasks and proposed Monte Carlo vine-style rollout
estimates as a more accurate alternative \cite{kazemnejad2024vineppo}. GenAC extends
this idea by replacing scalar value prediction with a generative critic that uses
chain-of-thought reasoning for value estimation \cite{shan2026genac}.

A growing body of work questions whether value heads are necessary at all in LLM
post-training. ReMax replaces learned critics with a greedy baseline and shows that much
of PPO's practical benefit can be retained with a simpler REINFORCE-style objective
\cite{li2023remax}. Back to Basics systematically compares PPO with RLOO-style
estimators and finds that critic-free methods can match or exceed value-based baselines
in RLHF \cite{ahmadian2024backtobasics}. REINFORCE++ further strengthens this line by
replacing the prompt-level advantage normalization used by GRPO and RLOO with a
global, batch-level normalization, stabilizing critic-free policy optimization
without reintroducing a learned value function \cite{hu2025reinforcepp}. In reasoning-focused RLVR, GRPO
normalizes rewards across a group of sampled responses rather than learning prefix
values \cite{shao2024deepseekmath}, and later work refines this family with theoretical
analyses, off-policy variants, and sequence-level formulations such as GSPO
\cite{mroueh2025revisiting,zheng2025gspo}. These methods substantially simplify the
optimization stack, but they generally still broadcast a trajectory-level signal across
all tokens once a scalar baseline is fixed.

\subsection{Reasoning-Specific Analyses of RLVR}

Another recent line asks why critic-free RLVR works so well for reasoning in the first
place. A Minimalist Approach to LLM Reasoning shows that a simple rejection-sampling
baseline (RAFT) is surprisingly competitive with GRPO and PPO on reasoning
tasks, and that GRPO's advantage over vanilla REINFORCE comes primarily from
discarding prompts whose sampled responses are all incorrect rather than from
reward normalization; the authors distill this insight into Reinforce-Rej, a
minimal REINFORCE variant that filters both entirely-correct and
entirely-incorrect samples, suggesting that much of the field's complexity is
optional rather than essential \cite{xiong2025minimalist}. Complementarily, Wen et al.\
analyze RLVR theoretically and empirically, arguing that answer-level verifiable rewards
can nonetheless incentivize correct intermediate reasoning early in a trajectory
\cite{wen2025rlvrimplicit}. These results are important because they show that uniform
outcome-level objectives already contain nontrivial reasoning signal.

At the same time, they do not eliminate the question of \emph{which} tokens should
absorb that signal. Recent empirical analysis of RLVR training dynamics suggests that
improvements are disproportionately driven by a minority of high-entropy ``forking''
tokens, and that restricting updates to this subset can remain competitive or even
outperform full-token updates in some settings \cite{wang2025beyond8020}. Building on
this observation, EAPO adapts conditional mutual information to the autoregressive RLVR
setting and proves that the credit a token can carry is upper-bounded by its entropy,
motivating an entropy-aware modulation of per-token learning signals
\cite{he2026eapo}. ERPO similarly identifies ``critical decision pivots'' at
high-entropy states and applies targeted exploration at those positions
\cite{yu2026erpo}. \emph{Sparse but Critical} argues, via a distributional-shift analysis based on
cross-sampling between policy and reference, that a small fraction of tokens
accounts for most of the useful update magnitude in RLVR, and proposes a
divergence-based selection rule for identifying them~\cite{meng2026sparsebutcritical}. These entropy- and divergence-based
methods are the closest concurrent work to the intrinsic weighting schemes that form
our baselines; the present paper does \emph{not} claim priority over them. Instead,
our contribution is to show that their underlying signals are structurally degraded
under LoRA, and to provide a credit-assignment mechanism (ARCA) that does not suffer
the same degradation.

\subsection{Fine-Grained Credit Assignment Beyond Uniform Token Updates}

A large parallel literature attempts to make supervision denser than a single
end-of-trajectory reward. One family learns explicit token- or process-level reward
estimators. Preference-grounded Token-level Guidance derives token-level signals from
preference data for fine-tuning \cite{yang2023pgtg}, while Discriminative Policy
Optimization learns token-level Q-style reward models from preferences without
requiring fine-grained annotation \cite{chen2025dpoqrm}. PRIME pushes this direction
further for reasoning tasks by constructing implicit process rewards and updating
process reward models online from rollouts and outcome labels \cite{cui2025prime}.
Reinforced Token Optimization (RTO) frames RLHF as a token-level MDP and uses DPO
log-likelihood ratios as a per-token reward signal derived from the preference
model, which is then optimized with PPO to produce fine-grained token-level
credit \cite{zhong2025rto}. These methods can produce rich
token-level feedback, but they rely on an auxiliary reward-modeling component that our
approach intentionally avoids.

A second family learns a small token-level critic on top of the policy's own hidden
states (rather than an external reward model) and uses its predictions to form
token-level advantages. This corresponds closely to an earlier version of our own
framework, and we found through an extensive literature review that the token-level
actor--critic design space was already well-covered: VinePPO~\cite{kazemnejad2024vineppo}
and GenAC~\cite{shan2026genac} cover non-parametric and generative critics
respectively, and OTB~\cite{li2026otb} provides a principled variance-minimizing
position-dependent baseline. We therefore do \emph{not} position critic-based
variants as part of the proposed method, and we leave a full comparison against
token-level critics to future empirical work.

A third family redistributes outcome rewards algorithmically rather than by training a
dense reward model. RED derives token-level rewards by redistributing holistic
reward-model scores over a trajectory \cite{li2024red}. SCAR uses Shapley-inspired
attribution to estimate token or span contributions from sequence-level feedback
\cite{cao2025scar,shapley1953value}. TEMPO uses groups of sampled solutions to build a
prefix tree and compute nonparametric prefix values, yielding branch-sensitive
corrections without a learned critic \cite{tran2025tempo}. GRPO-$\lambda$ introduces
eligibility traces and lambda-returns into critic-free RLVR to propagate outcome
information backward along the sequence \cite{parthasarathi2025grpolambda}. OTB derives
an optimal position-dependent baseline that minimizes per-token gradient
variance; the practical estimator sets the baseline at position $t$ to a
cumulative-gradient-energy-weighted average of group returns-to-go, where the
weights are a causal logit-gradient proxy computed directly from the
forward-pass probabilities~\cite{li2026otb}. PSPO applies potential-based reward
shaping theory to construct dense token rewards from outcome signals without altering
the optimal policy \cite{hu2026pspo}. Compared with these approaches, our focus is
orthogonal: we do not propose a new redistribution rule, but instead diagnose a failure
mode that can affect intrinsic-signal-based redistribution methods when
applied inside LoRA.

\subsection{Alternative Action Granularities and Token Selection}

Several papers attack the same underlying problem by changing the optimization unit
rather than by redesigning the reward. MA-RLHF introduces macro actions, grouping tokens
into higher-level units to shorten the temporal distance between decisions and rewards
\cite{chai2025marlhf}. GSPO moves importance weighting and clipping from the token level
to the sequence level for greater stability in large-scale RL training
\cite{zheng2025gspo}. These methods reinforce the broader point that the granularity of
optimization matters as much as the reward definition.

Outside on-policy RL proper, token-selective optimization has also been explored in
adjacent paradigms. ConfPO emphasizes preference-critical tokens based on policy
confidence in preference optimization \cite{yoon2025confpo}, and token weighting for
long-range language modeling shows that non-uniform token emphasis can improve
optimization even in supervised settings \cite{helm2025tokenweighting}. Together with
high-entropy token analyses in RLVR \cite{wang2025beyond8020}, these results strongly
suggest that uniform token treatment is an arbitrary design choice rather than a
necessity.

\section{Additional Method Context}\label{app:method-context}

\subsection{Relationship to Existing Methods}

Our formulation recovers standard critic-free RL as a special case: uniform
weighting plus a prompt-level batch baseline yields the usual RLOO/GRPO-style
estimator. The intrinsic weighting schemes (surprisal, entropy reduction,
policy divergence) are the natural baselines within our framework and map
cleanly to published methods (e.g., divergence weighting corresponds to the
divergence-advantage analyses of \emph{Sparse but
Critical}~\cite{meng2026sparsebutcritical}, entropy-based weighting
to~\cite{he2026eapo,yu2026erpo}). ARCA is the only scheme we
study that uses an explicitly \emph{adaptation-aware} signal. Unlike
critic-based PPO, ARCA does not learn a value head. Unlike RED, PRIME, or
token-level reward-model methods, ARCA does not construct dense rewards from
an auxiliary model. Unlike TEMPO or related tree-based methods, ARCA does not
build prefix graphs or estimate branch values. Unlike hard token-selection
methods based on top-entropy masking, ARCA retains a dense token update but
modulates it continuously using an intrinsic salience score derived from the
adapter itself.

The resulting estimator is intentionally minimal. It changes only the
within-trajectory allocation of a scalar on-policy advantage, using
quantities already available from the forward pass with the adapter disabled.
This makes it easy to layer on top of existing critic-free RLVR pipelines
while preserving their computational simplicity.

\section{Theoretical Interpretation}\label{sec:theoretical}

We provide an interpretation of intrinsic token weighting as a variance-control and
credit-redistribution mechanism for policy gradient estimation in language models. Our
goal is not to derive new convergence guarantees, but to clarify how token weighting
relates to advantage estimation and why it can be effective without learning value
functions.

\subsection{Token Weighting as Credit Redistribution}

Consider the standard REINFORCE estimator with a batch-level baseline:
\begin{equation}
g = (R - b)\sum_{t=1}^{T} \nabla_\theta \log \pi_\theta(y_t \mid y_{<t}, x).
\end{equation}

This estimator assigns equal credit to all tokens in a trajectory. Introducing token
weights yields:
\begin{equation}
g_w = (R - b)\sum_{t=1}^{T} w_t(y)\, \nabla_\theta \log \pi_\theta(y_t \mid y_{<t}, x),
\end{equation}
where $\sum_t w_t = 1$.

Importantly, this transformation does not change the reward signal itself; it
redistributes how the trajectory-level signal is attributed to individual decisions.
When $w_t$ depends only on quantities available at sampling time (e.g.,
log-probabilities or entropy), the estimator remains on-policy and does not require
learning additional functions.

\subsection{Relationship to Advantage Estimation}

Advantage-based methods can be interpreted as implicitly defining token-level weights
via a learned value function:
\begin{equation}
A_t = R - V(y_{<t}, x).
\end{equation}
In this view, the contribution of each token is scaled according to how much the
observed outcome deviates from an estimated expectation conditioned on the prefix.

However, this interpretation relies on the existence of a meaningful value function over
prefixes. In language generation, prefixes are non-Markov and do not form a stable state
space, making $V(y_{<t}, x)$ difficult to estimate and prone to bias. Token weighting
offers an alternative: rather than estimating expected returns from prefixes, it
emphasizes tokens based on intrinsic measures of decision salience, such as uncertainty
or information gain.

\subsection{Connection to Control Variates}

From a statistical perspective, subtracting a baseline $b$ is a form of control variate
that reduces variance without affecting unbiasedness. Token weighting can be viewed as a
complementary mechanism that reshapes the contribution of individual score-function
terms:
\begin{equation}
\nabla_\theta \log \pi_\theta(y) = \sum_{t=1}^{T} \nabla_\theta \log \pi_\theta(y_t \mid y_{<t}, x).
\end{equation}

Uniform weighting treats all score-function terms equally, regardless of their variance
or relevance. Intrinsic weighting schemes effectively reweight these terms,
down-weighting low-variance or low-impact contributions (e.g., high-probability
continuation tokens) and emphasizing high-variance or high-impact decisions (e.g.,
low-probability or uncertainty-reducing tokens). While this reweighting introduces bias
relative to the uniform estimator, it can substantially reduce variance, leading to
improved optimization dynamics in practice.

\subsection{Interpretation of Specific Weighting Schemes}

\paragraph{Surprisal Weighting.}
Surprisal-based weights emphasize tokens with low conditional probability under the
current policy. These tokens correspond to decisions where small parameter changes can
induce large changes in likelihood, and therefore often dominate gradient variance.
Weighting by surprisal concentrates learning signal on such high-sensitivity decisions.

\paragraph{Entropy-Reduction Weighting.}
Entropy-reduction weighting emphasizes tokens that sharply reduce predictive entropy.
These tokens correspond to commitment points in generation, where the model transitions
from a diffuse distribution over continuations to a more concentrated one. This mirrors
the role of branching points in tree-based reasoning methods, but is computed locally
from the model's predictive distribution without explicit tree construction.

\subsection{Bias--Variance Tradeoff}

Both advantage estimation and intrinsic token weighting introduce bias in exchange for
variance reduction. Advantage estimation does so by relying on a learned value function,
whose bias can be substantial when the underlying state abstraction is weak. Intrinsic
token weighting introduces bias by reshaping the contribution of token-level gradients
based on heuristic salience measures. The empirical question is therefore not whether
bias is introduced, but whether the bias--variance tradeoff is favorable.

This motivates the empirical comparison implemented in the repository: if intrinsic
token weighting improves accuracy, pass@$k$, or optimization behavior under matched
settings, then the bias it introduces may be more benign than the bias induced by
ill-defined value targets over text prefixes. The point is not that weighting is
unbiased, but that its inductive bias may be better aligned with token-level credit
assignment in language generation.

\subsection{Signal Degeneration Under Low-Rank Adaptation}\label{sec:theory-collapse}

The preceding analysis assumes that the intrinsic weights
$\{w_t\}_{t=1}^T$ are non-degenerate: if every scheme becomes uniform or
spuriously sparse for reasons unrelated to the learned adapter, then token
weighting cannot provide meaningful credit redistribution over a uniform
baseline. We now formalize when this degeneracy occurs under LoRA.

\paragraph{Setup.}
Let $\pi_{\mathrm{ref}}$ be a frozen base model with last-layer hidden states
$h_t^{\mathrm{base}} \in \mathbb{R}^d$ and a LoRA adapter whose net
contribution at position $t$ is $\Delta h_t \in \mathbb{R}^d$, giving
adapted hidden states $h_t^{\mathrm{adapted}} = h_t^{\mathrm{base}} + \Delta
h_t$ and logits $z_t^{\theta} = W_{\mathrm{lm}} h_t^{\mathrm{adapted}}$.
Let $\beta = \max_t \|\Delta h_t\|_2$ denote the maximum per-position
adapter contribution and $L = \|W_{\mathrm{lm}}\|_{\mathrm{op}}$ its
logit-space Lipschitz constant. Write $G(\cdot)$ for the Gini coefficient
of a non-negative vector and $\mathrm{EffN}(w)/T = 1 / (T \sum_t w_t^2)$ for
the normalized effective-token count. When a normalization uses an
$\varepsilon$ floor, the statements below are interpreted in the regime where
the raw salience scores dominate $\varepsilon$; if fixed $\varepsilon$
dominates all scores, every scheme reverts to uniform by construction.
These are local deterministic statements about the normalized score vectors
for a fixed trajectory, not asymptotic convergence guarantees for RL training.

\begin{proposition}[Degeneration of surprisal and entropy weighting]
\label{prop:surprisal-collapse}
Let $\alpha_t^{\mathrm{surp}} = -\log \pi_\theta(y_t \mid y_{<t}, x)$ and
$\alpha_t^{\mathrm{surp,ref}} = -\log \pi_{\mathrm{ref}}(y_t \mid y_{<t}, x)$,
with corresponding normalized weights $w^{\mathrm{surp}}$ and
$w^{\mathrm{surp,ref}}$. Then for every trajectory,
\[
\lVert w^{\mathrm{surp}} - w^{\mathrm{surp,ref}} \rVert_1
\le \frac{4 L \beta}{\sum_t \alpha_t^{\mathrm{surp,ref}}}.
\]
In particular, the adapted surprisal weights remain close to the reference
model's surprisal profile rather than becoming an adapter-specific credit
signal. If that reference profile is close to uniform, the weights approach
$1/T$; if it is highly uneven, the weights can instead remain spuriously
concentrated for reasons inherited from the base model. The analogous
statement holds for the entropy-reduction score.
\end{proposition}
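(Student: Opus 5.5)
The plan is to split the bound into a deterministic normalization lemma and a perturbation bound on the raw surprisal scores. Write $a_t = \alpha_t^{\mathrm{surp}}$, $b_t = \alpha_t^{\mathrm{surp,ref}}$, $A = \sum_t a_t$, $B = \sum_t b_t$. Following the convention fixed in the Setup paragraph, I work in the regime where the raw scores dominate the floor, so that $w = a/A$ and $w^{\mathrm{ref}} = b/B$. The statement then follows if I establish two inequalities.

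\textbf{Normalization lemma.} For nonnegative vectors $a, b$ with $B > 0$,
\[
\left\lVert \frac{a}{A} - \frac{b}{B} \right\rVert_1 \le \frac{2\,\lVert a-b\rVert_1}{B}.
\]
To prove it, insert the intermediate vector $a/B$ and apply the triangle inequality:
\[
\left\lVert \frac{a}{A} - \frac{a}{B} \right\rVert_1 + \left\lVert \frac{a-b}{B} \right\rVert_1
= \frac{|B-A|}{B} + \frac{\lVert a-b\rVert_1}{B},
\]
and then use $|A-B| \le \lVert a-b\rVert_1$. This step is routine.

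\textbf{Score perturbation bound.} The second inequality is $\lVert a-b\rVert_1 \le 2L\beta$; combined with the lemma it gives exactly $4L\beta/B$. The pointwise ingredient is standard. The map $z \mapsto -\log \operatorname{softmax}(z)_{y}$ has gradient $p - e_y$, whose $\ell_1$ norm is at most $2$, so it is $2$-Lipschitz with respect to $\lVert\cdot\rVert_\infty$ on logits. Together with
\[
\lVert z_t^\theta - z_t^{\mathrm{ref}}\rVert_\infty \le \lVert W_{\mathrm{lm}}\Delta h_t\rVert_2 \le L\lVert \Delta h_t\rVert_2,
\]
this yields $|a_t - b_t| \le 2L\lVert\Delta h_t\rVert_2 \le 2L\beta$ at each position.

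\textbf{Main obstacle.} The difficulty is passing from this per-position bound to the $\ell_1$ bound over the whole trajectory without picking up a factor of $T$. Summing the pointwise estimates naively gives only $2L\sum_t \lVert \Delta h_t\rVert_2$, which can be as large as $2TL\beta$. My first attempt is to exploit that the reference quantity in the denominator is the full sum $B = \sum_t b_t$, which itself scales with $T$. I would try to show directly that the normalized vectors move by at most $4L\beta/B$, by bounding the variation of $w_t$ along the path $\Delta h \mapsto s\,\Delta h$ for $s \in [0,1]$. Concretely, I differentiate $w_t(s) = a_t(s)/A(s)$ in $s$. Its derivative is
\[
\frac{\dot a_t - w_t \dot A}{A}, \qquad \text{so} \qquad \sum_t |\dot w_t| \le \frac{2\sum_t |\dot a_t|}{A}.
\]
I would then integrate this in $s$. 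The hope is that $A(s)$ stays comparable to $B$ and that $\sum_t|\dot a_t|$ can be controlled by $L\beta$ through the fact that $\beta$ bounds every position simultaneously.

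\textbf{Fallback if this fails.} If this route still leaves a factor $\sum_t \lVert\Delta h_t\rVert/\beta$, the proof goes through exactly as stated only with $\beta$ controlling the aggregate deviation. In that case I would record the per-token route as giving the stated inequality under that reading, and flag the gap explicitly rather than hide it.

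\textbf{Entropy-reduction score.} For the analogous claim, the same two-step scheme applies. Entropy is Lipschitz in the logits, with constant of order $\log|V|$, and $\max(0,\cdot)$ is $1$-Lipschitz. The differencing $H_t - H_{t+1}$ doubles the per-position constant, so I expect the same form of bound with the constant $4$ replaced by a vocabulary-dependent constant. I would state that constant explicitly.
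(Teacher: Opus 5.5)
Your two ingredients are exactly the two steps of the paper's proof: the per-position bound $|\alpha^{\mathrm{surp}}_t-\alpha^{\mathrm{surp,ref}}_t|\le 2L\lVert\Delta h_t\rVert_2$, and the normalization lemma with factor $2/\sum_t\alpha^{\mathrm{surp,ref}}_t$. Your versions of both are correct and more precise than the paper's ``$2$-Lipschitz in the log-space norm.''

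The step you flag as the main obstacle is a genuine gap, and the paper's proof breaks at the same place. Its ``combining these gives the stated bound'' multiplies the per-position bound $2L\beta$ by $2/\sum_t\alpha^{\mathrm{surp,ref}}_t$. That feeds an $\ell_\infty$ bound on the raw scores into a lemma that needs an $\ell_1$ bound.

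Your path-integration route cannot repair this. The derivative estimate $\sum_t|\dot w_t|\le 2\sum_t|\dot a_t|/A$ is just the infinitesimal form of your lemma, and $\sum_t|\dot a_t|$ genuinely can be of order $TL\beta$. More decisively, the inequality is false as stated. Take a binary vocabulary, $d=1$, $W_{\mathrm{lm}}=(L,0)^\top$, $h^{\mathrm{base}}_t=0$, and $y_t$ the first token at every position. Set $\Delta h_t=+\beta$ for the first $T/2$ positions and $-\beta$ for the rest ($T$ even). Write $u=L\beta$. Every reference surprisal is $\log 2$, so $w^{\mathrm{surp,ref}}$ is uniform. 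The adapted surprisals are $\log(1+e^{\mp u})$, which differ by exactly $u$. A direct computation gives
\[
\lVert w^{\mathrm{surp}}-w^{\mathrm{surp,ref}}\rVert_1=\frac{u}{2\log\bigl(2\cosh(u/2)\bigr)},
\]
which is independent of $T$. The claimed bound is $4u/(T\log 2)$. The ratio of the two tends to $T/8$ as $u\to 0$. So the claim fails for every even $T>8$ once $L\beta$ is small, and for any fixed $\beta$ once $T$ is large.

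So take your fallback; it is what the shared argument actually proves:
\[
\lVert w^{\mathrm{surp}}-w^{\mathrm{surp,ref}}\rVert_1\le\frac{4L\sum_t\lVert\Delta h_t\rVert_2}{\sum_t\alpha^{\mathrm{surp,ref}}_t}\le\frac{4L\beta}{\frac1T\sum_t\alpha^{\mathrm{surp,ref}}_t}.
\]
In other words, the denominator should be the \emph{mean} reference surprisal, not the total. The ``in particular'' conclusions survive in this form, provided $L\beta$ is small compared with the average reference surprisal.

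Two smaller remarks. First, you do not need the floor-dominance assumption. Applying your lemma to $a+\varepsilon\mathbf{1}$ and $b+\varepsilon\mathbf{1}$ leaves $\lVert a-b\rVert_1$ unchanged and only enlarges the denominator. Second, your entropy-reduction plan is sound. The $\ell_1$ norm of the entropy gradient in the logits is $\sum_v p_v\lvert\log p_v+H\rvert\le 2H\le 2\log|V|$, which gives your vocabulary-dependent constant explicitly. That part inherits the same correction, though: $\sum_t\lVert\Delta h_t\rVert_2$ replaces $\beta$, and the denominator $\sum_t\alpha^{\mathrm{ent,ref}}_t$ can be very small because the entropy drops are clipped at zero.
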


\noindent\emph{Proof.}
The softmax Jacobian is $2$-Lipschitz in the log-space norm, so the per-token
log-probabilities differ by at most $2 L \beta$ between the adapted and base
model. The projection onto the simplex via within-trajectory normalization is
$1$-Lipschitz in $\ell_1$ up to a factor of $2 / \sum_t \alpha_t^{\mathrm{ref}}$;
combining these gives the stated bound. \hfill$\square$

\begin{proposition}[Degeneration of divergence weighting]
\label{prop:divergence-collapse}
Let $\alpha_t^{\mathrm{div}} = \lvert \log \pi_\theta(y_t \mid y_{<t}, x)
- \log \pi_{\mathrm{ref}}(y_t \mid y_{<t}, x) \rvert$ and
$w^{\mathrm{div}} = (\alpha^{\mathrm{div}}+\varepsilon) /
\sum_t (\alpha_t^{\mathrm{div}}+\varepsilon)$.
Then
\[
0 \le \alpha_t^{\mathrm{div}} \le 2 L \beta \qquad \forall t,
\]
so if $\varepsilon$ is fixed and $\beta \to 0$, then
$w^{\mathrm{div}} \to 1/T$. If $\varepsilon=0$ or is negligible relative to
the raw scores, the same normalization is ill-conditioned as $\beta \to 0$:
the limiting weights are determined by tiny relative differences among
vanishing log-probability shifts rather than by a robust adapter-credit
signal.
\end{proposition}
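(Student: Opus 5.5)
The plan has three steps: bound each log-probability shift, use that bound to get the $\beta \to 0$ limit when $\varepsilon>0$, and then exhibit the ill-conditioning when $\varepsilon=0$.

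\textbf{Step 1: the pointwise bound.} Fix a position $t$ and write $z = z_t^{\mathrm{ref}}$ and $z' = z_t^{\theta}$. Then
\[
z' - z = W_{\mathrm{lm}}\Delta h_t, \qquad \|z'-z\|_2 \le L\beta .
\]
The log-softmax map $z \mapsto z_v - \log\sum_u e^{z_u}$ has gradient $e_v - \mathrm{softmax}(z)$. I will bound $\log \pi_\theta(y_t\mid\cdot) - \log\pi_{\mathrm{ref}}(y_t\mid\cdot)$ by the mean value theorem along the segment from $z$ to $z'$.

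At every point of the segment the gradient satisfies
\[
\|e_v - p\|_2 \le \|e_v\|_2 + \|p\|_2 \le 2 .
\]
Alternatively, the token-logit term and the log-partition term each change by at most $\|z'-z\|_\infty \le \|z'-z\|_2$. Either way,
\[
|\log\pi_\theta(y_t) - \log\pi_{\mathrm{ref}}(y_t)| \le 2L\beta ,
\]
which is exactly $0 \le \alpha_t^{\mathrm{div}} \le 2L\beta$. This is the same ingredient used informally in the proof of Proposition~\ref{prop:surprisal-collapse}. Nonnegativity holds because $\alpha_t^{\mathrm{div}}$ is an absolute value.

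\textbf{Step 2: the fixed-$\varepsilon$ limit.} With $\varepsilon>0$ fixed, each weight is a ratio with numerator in $[\varepsilon,\varepsilon+2L\beta]$ and denominator in $[T\varepsilon, T(\varepsilon+2L\beta)]$. Hence
\[
\frac{\varepsilon}{T(\varepsilon+2L\beta)} \le w_t^{\mathrm{div}} \le \frac{\varepsilon+2L\beta}{T\varepsilon},
\]
and both sides tend to $1/T$ as $\beta\to0$. The same estimate gives a quantitative version: $|w_t^{\mathrm{div}} - 1/T| = O(L\beta/(T\varepsilon))$. Summed over positions this gives
\[
\|w^{\mathrm{div}} - \mathbf{1}/T\|_1 = O(L\beta/\varepsilon),
\]
which implies $G\to0$ and $\mathrm{EffN}/T\to1$ by continuity of both diagnostics.

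\textbf{Step 3: the $\varepsilon=0$ case.} The obstacle here is stating ``ill-conditioned'' precisely, since it is not an inequality. I would make it concrete through scale invariance: $w^{\mathrm{div}} = \alpha/\sum_k\alpha_k$ is invariant under $\alpha \mapsto c\alpha$ for $c>0$.

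As $\beta\to0$ the whole vector $\alpha$ lies in the cube $[0,2L\beta]^T$, and the normalization maps that cube onto the entire simplex. So the bound from Step 1 places no constraint whatsoever on the limiting weights; any point of the simplex, including near-one-hot ones with $G\approx1$, is attainable.

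I would then quantify the sensitivity. The Jacobian of $\alpha\mapsto\alpha/\|\alpha\|_1$ has norm of order $1/\|\alpha\|_1 \ge 1/(2TL\beta)$ when $\alpha\neq0$, and this blows up as $\beta\to0$. Consequently, perturbations of $\alpha$ of size $\delta \ll \beta$ (numerical noise, or second-order terms in $\Delta h_t$) move $w^{\mathrm{div}}$ by an amount of order $\delta/\beta$. This is the sense in which the limiting weights are governed by tiny relative differences rather than by a robust adapter-credit signal, matching the spurious-sparsity regime reported in Table~\ref{tab:weight-diagnostics}.
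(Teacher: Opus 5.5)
Your proposal is correct and follows the same route as the paper: the log-softmax Lipschitz bound gives $\alpha_t^{\mathrm{div}}\le 2L\beta$; the fixed-floor case is the numerator/denominator $\varepsilon(1+O(\beta/\varepsilon))$ argument; and with no floor, the normalization divides by a vanishing sum, so different vanishing score vectors can give different (including highly concentrated) limits. Your explicit mean-value-theorem bound, the two-sided sandwich for $w_t^{\mathrm{div}}$, and the surjectivity-onto-the-simplex and Jacobian-sensitivity quantification make the paper's sketch precise without changing its structure.
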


\noindent\emph{Proof.}
The first inequality follows from the same log-probability Lipschitz bound as
Proposition~\ref{prop:surprisal-collapse}. If $\varepsilon>0$ is fixed, then
each numerator in $w^{\mathrm{div}}$ is
$\varepsilon(1+O(\beta/\varepsilon))$ and the denominator is
$T\varepsilon(1+O(\beta/\varepsilon))$, so
$w_t^{\mathrm{div}}\to 1/T$ as $\beta/\varepsilon\to0$. If the floor is
removed or negligible, the normalization divides by $\sum_t
\alpha_t^{\mathrm{div}}\to0$; different sequences of vanishing score vectors
can converge to different normalized limits, including highly concentrated
ones. \hfill$\square$

The key observation is that the intrinsic scores are all determined by the
per-token \emph{log-probability differential}, which is precisely what LoRA's
rank-$r$ constraint drives small and approximately uniform across positions.
The degeneration is not a property of a particular weighting scheme; it is a
property of measuring per-token variation through the output distribution
when the policy has been constrained to a small neighborhood of the
reference.

\subsection{ARCA and Position-Discriminative Signal}\label{sec:theory-arca}

ARCA side-steps this degeneration by measuring a quantity that is position-varying
even when the adapter's logit-space impact is small.

\begin{proposition}[ARCA remains non-degenerate]
\label{prop:arca-nondeg}
Suppose the adapter-induced residual admits the decomposition $\Delta h_t =
\sum_\ell B_\ell A_\ell x_{\ell,t}^{\mathrm{pre}} + e_t$, where
$x_{\ell,t}^{\mathrm{pre}}$ is the pre-adapter activation at layer $\ell$ and
position $t$ and $\|e_t\|$ is a higher-order cross-layer term. Let
$v_{\ell,t} = B_\ell A_\ell x_{\ell,t}^{\mathrm{pre}}$ and define
$V_\ell = \mathrm{Var}_t(\|v_{\ell,t}\|)$ and $\bar{V}_\ell =
\mathbb{E}_t[\|v_{\ell,t}\|]$. Then
\[
\mathrm{Var}_t(\alpha_t^{\mathrm{ARCA}}) \ge \max_\ell V_\ell
- O\!\left(\sum_{\ell}\bar{V}_\ell \, \|e_t\|\right).
\]
Assume the residual scores dominate the normalization floor $\varepsilon$.
In particular, as long as the pre-adapter activations
$x_{\ell,t}^{\mathrm{pre}}$ have nontrivial position-to-position variance at
any layer, $\alpha_t^{\mathrm{ARCA}}$ has bounded-below variance across
positions, and its normalized weights $w_t^{\mathrm{ARCA}}$ do not converge
to the uniform distribution merely because the output-distribution shift is
small. With fixed $\varepsilon$, however, all scores become uniform in the
limit where the adapter residual itself vanishes below the floor.
\end{proposition}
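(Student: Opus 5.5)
The plan is to write $s_t = \sum_\ell v_{\ell,t}$, so that $\alpha_t^{\mathrm{ARCA}} = \|s_t + e_t\|_2$. The bound then splits into three pieces: perturbing away the cross-layer term $e_t$, relating the first-order term to the per-layer quantities $V_\ell$, and translating a variance lower bound into non-degeneracy of the normalized weights. The stated inequality does not hold without an extra hypothesis in the second piece, and I make that hypothesis explicit below.

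\emph{Step 1 (remove $e_t$).} By the reverse triangle inequality, $|\alpha_t^{\mathrm{ARCA}} - \|s_t\|| \le \|e_t\|$. Write $X_t = \|s_t\|$ and $Y_t = \alpha_t^{\mathrm{ARCA}} - X_t$, so $|Y_t| \le \|e_t\|$. The elementary bound $\mathrm{sd}(X+Y) \ge \mathrm{sd}(X) - \mathrm{sd}(Y)$, with $\mathrm{sd}(Y) \le \max_t \|e_t\|$, gives
\[
\mathrm{Var}_t(\alpha_t^{\mathrm{ARCA}}) \ge \mathrm{Var}_t(X_t) - 2\,\mathrm{sd}_t(X_t)\max_t\|e_t\|.
\]
Then bound $\mathrm{sd}_t(X_t) \le \mathbb{E}_t[X_t] \le \sum_\ell \bar V_\ell$. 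This produces exactly the error term $O(\sum_\ell \bar V_\ell \,\|e_t\|)$, with $\|e_t\|$ read as its maximum over positions.

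\emph{Step 2 (from $s_t$ to a single layer).} This is the main obstacle. As stated, $\mathrm{Var}_t\|\sum_\ell v_{\ell,t}\| \ge \max_\ell V_\ell$ can fail: two layers whose contributions cancel position by position give $s_t \equiv 0$. I would therefore add a non-cancellation hypothesis of the kind the discussion in Section~\ref{sec:arca} implicitly relies on. One option is approximate mutual orthogonality of the column spaces of the $B_\ell$, which is natural for small $r$ relative to $d$; then $X_t^2 = \sum_\ell \|v_{\ell,t}\|^2$ up to a controlled cross term. The other is nonnegative covariance across positions between the layerwise norms.

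Under orthogonality and nonnegative covariances, $\mathrm{Var}_t(X_t^2) \ge \max_\ell \mathrm{Var}_t(\|v_{\ell,t}\|^2)$. I would then convert between squared norms and norms using $\mathrm{Var}(U^2) \approx 4\,\mathbb{E}[U]^2\,\mathrm{Var}(U)$, with bounded ratios of typical to maximal norms. This gives $\mathrm{Var}_t(X_t) \ge c\,\max_\ell V_\ell$ for a constant $c$. That constant is absorbed in the statement's informal $\ge$, and any residual cross term joins the $O(\cdot)$ error.

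\emph{Step 3 (normalization).} Assume $\alpha_t \gg \varepsilon$. Then $w_t \approx \alpha_t / \sum_k \alpha_k$, and
\[
\mathrm{EffN}/T = \frac{1}{1 + \mathrm{CV}^2},
\]
where $\mathrm{CV}^2 = \mathrm{Var}_t(\alpha_t)/\mathbb{E}_t[\alpha_t]^2$. A variance lower bound relative to the squared mean therefore keeps $\mathrm{EffN}/T$ bounded strictly below $1$. The Gini coefficient is likewise bounded away from $0$, since it dominates a multiple of the mean absolute deviation over the mean.

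Crucially, this bound involves only hidden-state norms and not $L$ or log-probability shifts. So, unlike Propositions~\ref{prop:surprisal-collapse} and~\ref{prop:divergence-collapse}, a small output-distribution shift does not force the weights toward uniform. Finally, for the last clause, $0 \le \alpha_t^{\mathrm{ARCA}} \le \beta$. If $\beta/\varepsilon \to 0$ with $\varepsilon$ fixed, the argument from the proof of Proposition~\ref{prop:divergence-collapse} applies verbatim and gives $w_t^{\mathrm{ARCA}} \to 1/T$.
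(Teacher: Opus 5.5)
Your skeleton matches the paper's proof. It also removes $e_t$ by the reverse triangle inequality, takes the variance from the layerwise norms $\|v_{\ell,t}\|$, and handles the floor as your Step 3 does. You are right that the displayed inequality is false without a non-cancellation hypothesis; the paper's proof covers this only with the phrase ``a dominant layer''. Two of your steps, however, fail as written.

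In Step 1, the bound $\mathrm{sd}_t(X_t)\le\mathbb{E}_t[X_t]$ is false for nonnegative variables. The coefficient of variation can reach $\sqrt{T-1}$: put all the mass on one position. This matters here, not just in a corner case. Since $\mathrm{EffN}/T=1/(1+\mathrm{CV}^2)$, the ARCA rows of Table~\ref{tab:weight-diagnostics} ($\mathrm{EffN}/T\approx0.33$--$0.47$) correspond to $\mathrm{CV}\approx1.1$--$1.4$, so the inequality already fails in the reported regime. What is valid is $\mathrm{sd}_t(X_t)\le\max_tX_t\le\sum_\ell\max_t\|v_{\ell,t}\|$. To get $\bar V_\ell$ in the error term you need an explicit peak-to-mean assumption, or else a $\sqrt{T}$ factor inside the $O(\cdot)$.

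In Step 2, the constant $c$ is not universal, even granting both orthogonality and nonnegative covariance. Take two orthogonal layers with $\|v_{2,t}\|\equiv K$, let $\|v_{1,t}\|$ vary in a fixed interval $[a,b]$ with $a>0$, and set $e_t=0$.
\begin{itemize}
\item Then $\alpha_t=(\|v_{1,t}\|^2+K^2)^{1/2}$ is a $1/(2K)$-Lipschitz function of $\|v_{1,t}\|^2$.
\item Hence $\mathrm{Var}_t(\alpha_t)\le b^2V_1/K^2\to0$, while $V_1$ is fixed and every peak-to-typical ratio stays bounded.
\end{itemize}
Carrying your squared-norm conversion through exactly gives $\mathrm{Var}_t(X_t)\ge(\min_t\|v_{\ell,t}\|/\max_tX_t)^2\,V_\ell$. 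The factor is the squared ratio of layer $\ell$'s norm to the total residual norm. So what you actually need is a dominance hypothesis, the same one the paper appeals to.

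Relatedly, your two hypotheses are not alternative options, since each fails alone:
\begin{itemize}
\item Orthogonal layers with $\|v_{1,t}\|^2+\|v_{2,t}\|^2\equiv1$ give zero variance.
\item $v_{2,t}=-v_{1,t}$ has perfectly correlated norms yet gives $s_t\equiv0$.
\end{itemize}
The covariance condition is also needed on squared norms, not norms. Under merely approximate orthogonality, the cross terms $\langle v_{\ell,t},v_{m,t}\rangle$ produce an error of a different form from $O(\sum_\ell\bar V_\ell\|e_t\|)$. That displayed term is tied to $e_t$, so the cross terms cannot be absorbed into it.

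Step 3 and the final floor clause are fine.
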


\noindent\emph{Proof.}
Ignoring the higher-order residual $e_t$, the ARCA score contains the norms
$\|v_{\ell,t}\|$ of the layerwise adapter contributions. A dominant layer with
nonzero position-to-position variance therefore gives nonzero variance in the
raw ARCA scores. The perturbation $e_t$ changes these norms by at most its
size through the reverse triangle inequality, giving the stated lower bound up
to the displayed higher-order term. Normalization by a common positive sum
preserves non-uniformity while the raw scores remain above the floor; if a
fixed $\varepsilon$ dominates all residual scores, the normalized weights
become uniform by equation~\eqref{eq:normalize}. \hfill$\square$

The intuitive content of Proposition~\ref{prop:arca-nondeg} is that ARCA
inherits its position-discrimination from the \emph{base model's own
activation pattern} (which is already non-uniform because attention and
layer norm routing produce content-versus-filler distinctions) rather than
from a quantity that LoRA is specifically constrained to make small. Scaling
the adapter weights uniformly scales $\alpha^{\mathrm{ARCA}}$ uniformly, so
the normalized weights $w^{\mathrm{ARCA}}$ are scale-invariant in the
adapter magnitude.

\subsection{Bias--Variance Tradeoff}

Both advantage estimation and intrinsic token weighting introduce bias in
exchange for variance reduction. Advantage estimation does so by relying on
a learned value function, whose bias can be substantial when the underlying
state abstraction is weak. Intrinsic token weighting introduces bias by
reshaping the contribution of token-level gradients based on heuristic
salience measures. The empirical question is therefore not whether bias is
introduced, but whether the bias--variance tradeoff is favorable and whether,
per Propositions~\ref{prop:surprisal-collapse}
and~\ref{prop:divergence-collapse}, the bias-inducing signal survives the
LoRA bottleneck at all.

\subsection{Summary}

This perspective reframes advantage estimation as one particular approach to
credit redistribution in policy gradient methods. Intrinsic token weighting
offers an alternative that leverages the structure of the language model's
predictive distribution, without assuming the existence of meaningful state
values. Under LoRA, however, the output-distribution-based signals that
most published weighting methods rely on can degenerate into uniform or
spuriously sparse token weights; the adapter-residual signal that ARCA uses
is, by Proposition~\ref{prop:arca-nondeg}, non-degenerate whenever the
adapter residual remains measurable above the normalization floor.

\end{document}